\newtheorem*{theorem}{Theorem}
\newtheorem*{lemma}{Lemma}
\title{Synthetic Datasets for Neural Program Synthesis}
\newcommand{\ucb}{UC Berkeley}
\newcommand{\mlab}{ML@B}
\author{Richard Shin\thanks{\texttt{ricshin@cs.berkeley.edu}} \\
\ucb \\
\And
Neel Kant \\
\ucb{} and ML@B\thanks{\href{https://ml.berkeley.edu/}{Machine Learning at Berkeley}} \\
\And
Kavi Gupta \\
\ucb{} \\
\And
Christopher Bender \\
\ucb{} and \mlab \\
\And
Brandon Trabucco \\
\ucb{} and \mlab \\
\And
Rishabh Singh \\
Google Brain \\
\And
Dawn Song \\
\ucb \\
}
\begin{document}

\maketitle

\begin{abstract}
 The goal of program synthesis is to automatically generate programs in a particular language from corresponding specifications, e.g. input-output behavior.
Many current approaches achieve impressive results after training on randomly generated I/O examples in limited domain-specific languages (DSLs), as with string transformations in RobustFill.
However, we empirically discover that applying test input generation techniques for languages with control flow and rich input space causes deep networks to generalize poorly to certain data distributions;
to correct this, we propose a new methodology for controlling and evaluating the bias of synthetic data distributions over both programs and specifications.
We demonstrate, using the Karel DSL and a small Calculator DSL, that training deep networks on these distributions leads to improved cross-distribution generalization performance. 
\end{abstract}

\section{Introduction}

Program synthesis is one of the central problems in Artificial Intelligence studied from the early days~\citep{manna71,waldinger69} and has seen a lot of recent interest in the machine learning and programming languages community~\citep{sygus,flashfill,ilpcacm,ilp,linsynth,sketch}. The recent neural approaches can be broadly classified into two categories: 
\emph{program induction} and \emph{program synthesis}.
Both approaches share the objective of learning program semantics but do so in different ways. Program induction aims to embed the semantics of a particular algorithm into a differentiable model trained end-to-end, whereas the goal of program synthesis is for a model to learn the semantics of a domain-specific language (DSL) and produce programs defined by corresponding specifications. Both problems necessitate large datasets, either of I/O pairs in the case of program induction, or programs with corresponding I/O pairs in the case of program synthesis.

However, since large datasets for program induction and synthesis tasks do not exist, these approaches train models on large synthetically generated datasets. Presumably, if a model can accurately predict \textit{arbitrary} program outputs (for induction) or programs in the DSL (for synthesis) then it has likely learnt the correct algorithm or DSL semantics.

Although this approach has led to some impressive synthesis results in many domains, 
synthetically generating datasets that cover all DSL programs and the corresponding input space can be problematic, especially for more complex DSLs like \textit{Karel the Robot}~\citep{pattis1981karel} which includes complex control-flow primitives (while loops and if conditionals) and operators. Likewise, for induction tasks, the sampling procedure for program specifications may lead to undesirable biases in the training distribution that inhibit strong generalization. 

In this paper, we consider two problem settings. The first is the Karel domain and the recently proposed Karel synthesis model~\citep{kareliclr2018}. We identify many distributions of input examples and DSL programs for which the Karel synthesis model performs poorly. 
The second problem setting is a program induction problem in which a model is trained to execute and predict the output of simple arithmetic expressions, which we denote the \textit{Calculator} domain; for this domain, we considered common 
synthetic data generation
strategies including one from \texttt{tensor2tensor} \citep{tensor2tensor}, an open-source deep learning library. Upon analysis, we find evidence of undesirable artifacts resulting from certain biases in the generation algorithm.

Our results indicate that models trained with common methodologies for synthesizing datasets fail to learn the full semantics of the DSL, even when they perform well on a test set, and suggest the need of a more principled way to generate synthetic datasets. For some program and input distributions, the state-of-the-art neural synthesis models perform quite poorly, often achieving less than $5\%$ generalization accuracy.
In our paper, we 
develop
a new methodology for creating training distributions over programs in the DSL to mitigate some of these issues.
Moreover, unlike previous works that have ignored considering the distributions over input space, we show that input distributions also play a significant role in determining the synthesizer performance.
Our methodology involves defining the distribution over DSL programs and input space using a set of random variables to encode much of the valuable features which describe the data,
e.g. in the Karel domain, the amount of control flow nesting in programs or the number of markers present in the inputs.

Our methodology allows us to identify several specialized distributions over the input space and Karel programs on which the current state of the art synthesis models~\citep{kareliclr2018} perform poorly when trained on 
traditional program and input distributions, and tested on our new distributions.
From this, we design new training distributions by ensuring greater uniformity over the random variables in our methodology.
By retraining the same architecture on these new training data distributions, we observe a greater ability to generalize, with significant improvements when evaluated on the aforementioned test sets. %
We also observe similar improvements in the \emph{Calculator} domain as well.

This paper makes the following key contributions:

\begin{itemize}
\itemsep0.1em
 \item We propose a new methodology to generate different desirable distributions over the space of datasets for program induction and synthesis tasks.
 \item We instantiate the methodology for the Karel and Calculator domains and show that model generalization is worse on datasets generated by our technique.
 \item We then retrain models in both domains and demonstrate that models achieve greater overall generalization performance when trained on datasets generated with our methodology.

\end{itemize}

\section{Related Work}

\subsection{Training Models with Synthetic Data}
\label{sec:related-work-synthetic-data}
In certain domains like computer vision and robotics, collecting high-quality real-world training data incurs significant cost, and so many researchers have investigated the use of large amounts of synthetic data.
For example, \citet{DBLP:journals/corr/ChristianoSMSBT16,DBLP:journals/corr/abs-1710-06537,DBLP:journals/corr/abs-1710-06542,DBLP:journals/corr/abs-1709-07857} aim to learn robotics policies that compensate for differences between the real world and the simulation.
Within computer vision, \citet{DBLP:journals/corr/ShrivastavaPTSW16} demonstrate learning from entirely synthetic images for gaze and pose estimation.
\subsection{Neural Program Induction and Synthesis.} 

Program induction methods learn differentiable modules such as stack~\citep{stackrnn}, RAM~\citep{nram}, GPU~\citep{ngpu}, and read-write external memory~\citep{ntm} to represent algorithms. 
Other methods attempt to learn differentiable control flow operations \citep{terpret, nplatent}. 
These approaches reconstruct outputs given inputs, inferring the underlying algorithms.
Other methods instead learn neural modules from program traces rather than from I/O examples \citep{npi, cnpi, recursion}. 

\citet{robustfill,nsps} use neural program synthesis techniques for learning string editing programs in RobustFill. 
Similarly, \citet{deepcoder} learn array programs in DeepCoder, and \citet{deepapi} learn to compose API calls. 
\citet{kareliclr2018} apply neural program synthesis to the Karel domain we consider in our paper; we use their architecture and dataset.
Many of these approaches report high test performance, but good performance on synthetically-specificed programs may not indicate the model's ability to generalize to arbitrary user-desired programs. In fact, our results show that under certain distributions, these models perform quite poorly. To verify these models appropriately generalize to reasonably complex arbitrary programs, the test set should sufficiently represent the universe of these programs and their specifications.
Likewise, to avoid biasing the result, the test set should also draw uniformly from these programs and specifications. For example, for RobustFill, the data generation methodology only sampled programs uniformly from the string DSL, but did not take into account the distribution over input strings such as their length, frequencies of occurrence of regular expressions and their nesting, common words and constants etc.
\section{Data Generation Methodology}
\label{sec:abstract-data}

Currently, automated data generation focuses in large part on a constructive process, whose parameters can be tuned. We propose a complementary approach in which we perform a subsequent filtering step on this process to ensure that the resulting distribution has certain properties.

We define a \emph{salient} random variable as one whose distribution in the final dataset is of interest.
In the case of program synthesis, we consider two kinds of salient variables:
variables denoting important features of a program in the given DSL, such as its length and degree of nesting; and variables denoting features for the input space.

In many cases, we can modify our sampling procedure to ensure a desirable distribution of a particular salient variable.
However, for some salient variables, it is infeasible to tune the parameters of a given sampling procedure in order to obtain a desired distribution for that salient variable.
For example, if we sample programs directly from a context-free grammar, 
it is difficult to control the distribution of various salient variables such as program length, degree of nesting, etc.
This is a notable problem in both the Karel and Calculator domains.

Furthermore, within the context of program synthesis specifically, there is often an additional challenge: not all inputs are valid for all programs.
For example, in the Karel domain, an input for a given program would be invalid if the program attempts to perform illegal actions for that input (such as {\tt move} into walls or {\tt pickMarker} in a cell containing no markers).
The requirement that the program/input pairs suit each other itself acts as an unpredictable filter that makes it difficult to ensure uniformity of salient variables by tuning generation parameters.
As such, we propose a methodology for randomly sampling a dataset $\mathcal{D}$ (consisting of elements of $\mathbb{S}$) while ensuring that a given salient variable $\nu : \mathbb{S} \to \mathbb{X}$ (where $\mathbb{X}$ is finite and discrete), denoted as a random variable $X = \nu(s)$, has a uniform distribution throughout $\mathcal{D}$.
To do this, we first sample an example $s \sim q(\cdot)$ from an original distribution $q$.
We then add $s$ to $\mathcal{D}$ with probability $g(s)$, where
$$g(s) = (P_q[X = \nu(s)] + \varepsilon)^{-1} \left(\min_{x \in \mathbb X} P_q[X=x] + \varepsilon \right) ,$$
with $P_q[X]$, the probabilities induced over $X$ via $q$, calculated empirically based on counts computed with past samples drawn from $q$.
We repeat the above until $\mathcal{D}$ is of a desired size.
For full pseudocode see Section \ref{sec:salient-var-algo} in the appendix.

We use $\varepsilon \in \mathbb{R}^+$ as a hyperparameter to trade off the runtime of the  above procedure with the level of $X$'s uniformity in $\mathcal{D}$.
In Section \ref{sec:salient-var-proof} (in the Appendix), we provide a probabilistic bound on the uniformity of $X$ in the resulting distribution, for the case where $\varepsilon = 0$.
Also, for when $\varepsilon > 0$, we can show that drawing a single sample is possible in $O(\frac{1}{\varepsilon})$ calls to the original sampler (proof in Section \ref{sec:salient-var-effic}, with empirical experiments in Section \ref{sec:empirical-effic} and \ref{sec:empirical-evidence-increase}). Increasing $\varepsilon$ increases the algorithm's speed, at the cost of allowing the distribution of $X$ in $\mathcal{D}$ to further diverge from uniform.

\section{Application to Karel: Experiments with New Test Distributions}
\label{sec:failure}
\begin{figure}[tpb]
    \centering
    \includegraphics[scale=0.13]{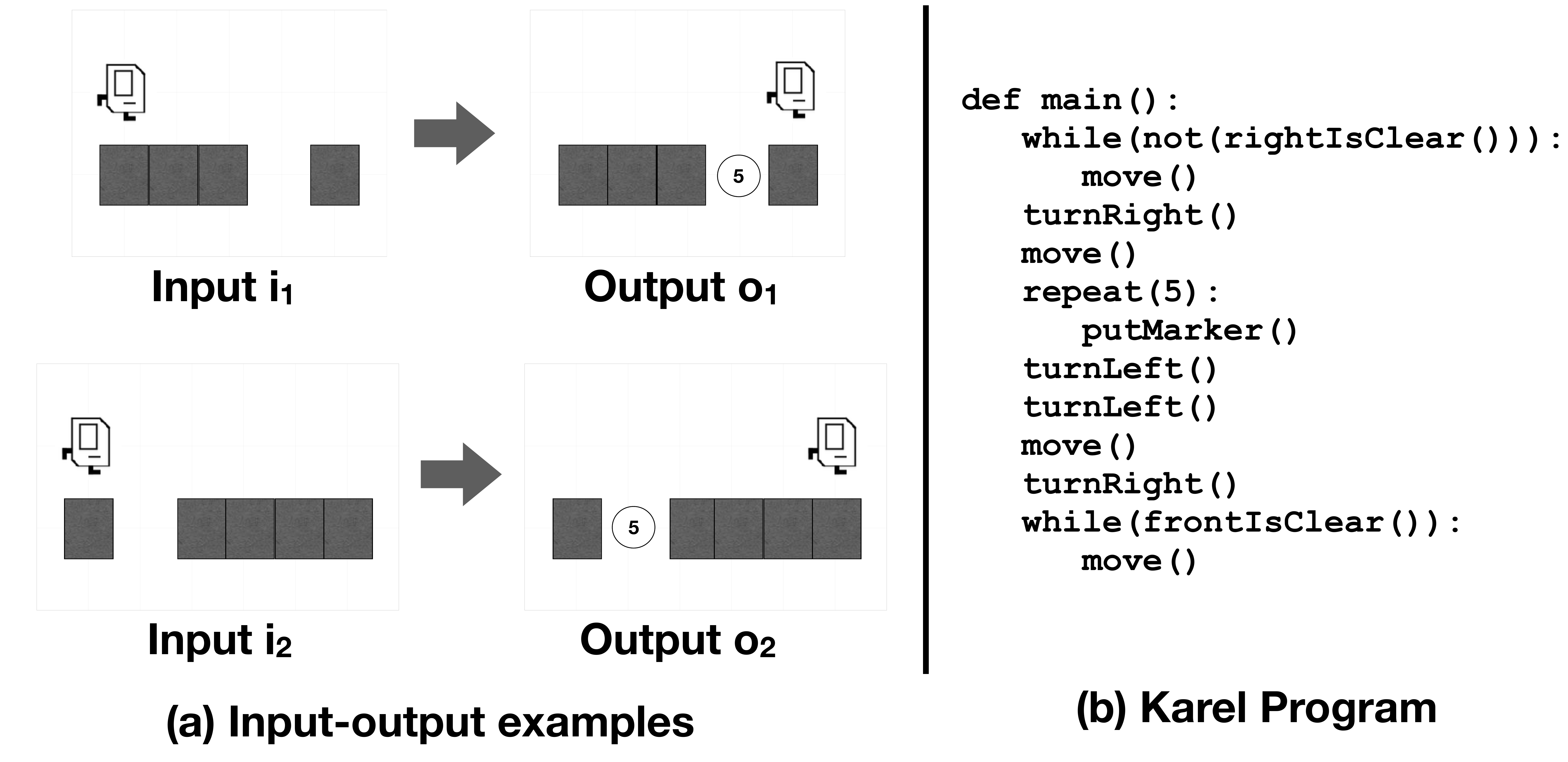}
    \caption{An example Karel synthesis task, where the goal is to synthesize the program in (b) given the two I/O examples shown in (a). See the Appendix for more details about Karel.}
    \label{fig:karel-ex}
\end{figure}

Karel is an educational programming language \citep{pattis1981karel} where the programmer writes imperative programs with conditionals and loops to produce a sequence of actions for an agent (a robot named Karel) which lives in a rectangular $m \times n $ grid world. 
For a detailed description of the particular instantiation of the Karel language and input grid specification that we consider, see Section~\ref{sec:karel-domain}. The program synthesis task that we consider is as follows: given a set of pairs of input and output grids
$\{(i_1, o_1), \cdots, (i_n, o_n)\}$,
find a Karel program $\pi$ such that executing $\pi$ on $i_1$ results in $o_1$, $i_2$ results in $o_2$, and so on. In the paper, $n = 5$ unless otherwise specified. An example Karel synthesis task with the I/O examples and the corresponding Karel program to be synthesized is shown in Figure~\ref{fig:karel-ex}.

In this section, we employ the Karel instantiation of our abstract data generation methodology in Section~\ref{sec:abstract-karel-instance} to generate different test datasets.
By imposing a more uniform distribution over the salient random variables when generating the I/O specifications and target programs which make up the test set, we observe much lower accuracies of the previous Karel synthesis models~\citep{kareliclr2018} compared to the original test set.
\subsection{Salient Variables in Karel}
\label{sec:abstract-karel-instance}
We devised the following salient random variables to describe the input space in Karel:
\begin{itemize}
\itemsep0.1em
    \item \emph{Grid size}:
    Dimensions of the grid in which Karel can act.
    \item \emph{Marker ratio}: 
    Fraction of cells with at least one marker.
    \item \emph{Wall ratio}: 
    Fraction of cells which contain a wall.
    \item \emph{Marker count}: 
    Number of markers that are present in a cell containing markers. 
    \item \emph{Number of grids}: Number of I/O grids shown to the model to specify the desired program.
\end{itemize}

For the program space in the Karel DSL, we consider the following random variables:

\begin{itemize}
\itemsep0.1em
    \item \emph{Program size}: Size of the program in terms of number of tokens.
    \item \emph{Control flow ratio}: Number of control flow structures appearing in the program.
    \item \emph{Nested control flow}: The amount of control flow nesting in programs (e.g. while inside if).
\end{itemize}

\subsection{Changing the I/O Distribution}
\label{sec:failure-spec}
We reproduced the encoder-decoder model of \citet{kareliclr2018} and trained it using the provided synthetic training set with the teacher-forcing maximum likelihood objective.
On the existing test set, our model achieves 73.52\% generalization accuracy, slightly higher than the 71.91\% accuracy reported in \citet{kareliclr2018}.
\emph{Generalization accuracy} denotes how often the model's output is correct on both the 5 I/O examples shown to the model and the remaining held-out 6\textsuperscript{th} I/O example.

To test how the model may be sensitive to changes in the I/O examples used to specify the program,
we created new test sets by sampling new input grids
and running them on each of the programs in the existing test set to obtain new I/O pairs.
By keeping the programs themselves the same, we avoid inadvertent changes in the inherent difficulty of the task (the complexity of the programs to be synthesized). 

\paragraph{Salient random variables with uniform distribution.}
\label{sec:failure-random-uniform}
We first generated grids such that they would follow a distribution that is as uniform as possible in the salient features in Section~\ref{sec:abstract-karel-instance}.
We used the following procedure to sample each grid:
1) sample the \emph{grid size} (height and width) from $x, y \sim \mathcal{U}\{2, \dots 16\}$;
2) sample the \emph{marker ratio} $r_\text{marker} \sim \mathcal{U}(0, 1)$ and \emph{wall ratio} $r_\text{wall} \sim \mathcal{U}(0, 1)$; 
3) for each cell $(i, j), 0 \le i < x, 0 \le j < y$ in the grid,
sample $m_{i,j} \sim Bernoulli(r_\text{marker})$ and $w_{i, j} \sim Bernoulli(r_\text{wall})$;
4) if $m_{i, j} = 1$ and $w_{i, j} = 0$, sample \emph{marker count} $mc_{i, j} \sim \mathcal{U}\{1, \dots 9\}$, otherwise set $mc_{i, j} = 0$;
5) place walls and markers in grid according to $w_{i, j}$ and $mc_{i, j}$;
6) place Karel at a random location (not containing a wall) and with a random orientation.
After generating 5 input grids for a given program, we ensure that the program does not crash on any of them and also check whether the 5 input grids exhibit complete \emph{branch coverage} (i.e., each branch is taken by at least one of the 5 inputs). If either of these conditions are not satisfied, we discard all 5 grids and start over.

On this dataset, the model trained on existing data achieved generalization accuracy of only 27.9\%,
which was a drop of 44.6pp from the existing test set's generalization accuracy of 73.52\%.

\paragraph{Salient random variables with narrow distributions.}
\label{sec:failure-random-narrow}
We further investigated the performance drop noted above by synthesizing ``narrower" datasets that captured different parts of the joint probability space over the salient input random variables. For each narrow dataset, we selected $r_\text{wall}$ and $r_\text{marker}$ (both between $0$ and $1$) as well as a distribution $\mathcal{D}_\text{marker count}$ which would be the same for all I/O grids. Then, we follow the procedure below for each grid:
1) sample the \emph{grid size} (height and width) $x, y \sim \mathcal{U}\{10, \dots 16\}$;
2) randomly choose $xy \cdot r_\text{wall}$ cells to contain walls, and $xy \cdot r_\text{marker}$ cells for markers;
3) sample $mc_{i,j} \sim \mathcal{D}_\text{marker count}$ for all cells $(i, j)$ chosen to contain markers;
4) place Karel at a random location (not containing a wall) and with a random orientation.
In our experiments, we primarily used 3 different distributions for $\mathcal{D}_\text{marker count}$: 
$Geom(0.5)$ truncated at 9, $\mathcal{U}\{1, \dots 9\}$, and $10 - Geom(0.5)$ which, when sampled, has a value equal to 10 minus a sample from Geom(0.5), truncated at 1.

The results are shown in Table~\ref{table:new-train-high-variance} (row 1, ``Baseline (\%)''). We discovered that the most correlated factor with model performance was the distribution $\mathcal{D}_\text{marker count}$. A more negative skew consistently lowered model performance, and this effect was more pronounced at higher values of $r_\text{marker}$. 

\begin{table}[t]
\caption{Generalization accuracies of the baseline model and a model trained on a uniform I/O distribution, on selected datasets. $G, U$, and $A$ stand for $Geom(0.5), \mathcal{U}\{1, \dots 9\}$ and $10 - Geom(0.5)$ respectively. See Section~\ref{sec:failure-random-narrow} for dataset generation details, and Section~\ref{sec:training-datasets-uniform-io} for details about the Uniform model.}
\label{table:new-train-high-variance}
\centering
\vspace{0.4em}
\makebox[\textwidth][c]{
\resizebox{1.0\textwidth}{!}{
\begin{tabular}{c|ccc|ccc|ccc|ccc}
\toprule
$r_\text{wall}$ & \multicolumn{3}{c}{0.05} & \multicolumn{3}{c}{0.25} & \multicolumn{3}{c}{0.65} & \multicolumn{3}{c}{0.85} \\ $r_\text{marker}$ & \multicolumn{3}{c}{0.85} & \multicolumn{3}{c}{0.65} & \multicolumn{3}{c}{0.25} & \multicolumn{3}{c}{0.05} \\ $\mathcal{D}_\text{marker count}$ & $G$ & $U$ & $A$  & $G$ & $U$ & $A$  & $G$ & $U$ & $A$  & $G$ & $U$ & $A$ \\
\midrule
Baseline (\%) & 24.30 & 1.32 & 0.04 & 21.08 & 2.98 & 0.08 & 16.63 & 13.31 & 6.63 & 15.99 & 12.88 & 12.98 \\
Uniform (\%) & 69.37 & 70.21 & 68.99 & 63.25 & 63.74 & 62.78 & 65.83 & 67.39 & 68.09 & 77.32 & 78.63 & 80.19 \\ \midrule
$\Delta$ & +45.07 & +68.89 & +68.95 & +42.17 & +60.76 & +62.70 & +49.20 & +54.08 & +61.46 & +61.33 & +65.75 & +67.21 \\
\bottomrule
\end{tabular}
}
}
\end{table}

\subsection{Changing the Program Distribution}
\label{sec:failure-program}
We will now examine how the existing model can surprisingly fail to perform well at synthesizing certain programs that are different from those in the existing validation and test sets.

\paragraph{Performance on complex DSL constructs.}
We examined whether or not the model could succeed in synthesizing programs which require nesting of conditional constructs. This was of interest since these programs were relatively rare in the training dataset.
We generated an evaluation dataset comprised solely of programs that contained \texttt{while} inside \texttt{while} statements, and another dataset in which all programs had \texttt{while} inside \texttt{if} statements.\footnote{To avoid any negative effects from changes in the I/O distribution, we attempted to ensure that $r_\text{wall}, r_\text{marker}$ and $\mathcal{D}_\text{marker count}$ matches that of the provided training and test sets.}
We found that the model fared very poorly on these datasets, achieving only 0.64\% and 2.23\% accuracy respectively. 

\begin{table}[t]
\caption{Results on programs only containing actions. The generalization accuracy on action-only programs in the existing test set is 99.24\%. See Section~\ref{sec:new-dataset} for details on Action-Only Augmented.
}
\label{table:failure-actions}
\centering
\resizebox{\textwidth}{!}{
\begin{tabular}{lllllllll}
\toprule
& \multicolumn{8}{c}{Program length} \\
\cmidrule{2-9}
Model type & 1 & 2 & 3 & 4 & 5 & 6 & 7 & 8 \\
\midrule
Baseline & 16.00\% & 30.00\% & 44.24\% & 52.88\% & 56.56\% & 66.94\% & 67.16\% & 73.06\% \\
Action-Only Augmented & 20.00\% & 41.60\% & 52.24\% & 61.72\% & 63.04\% & 72.20\% & 72.74\% & 78.12\% \\
\bottomrule
\end{tabular}
}
\end{table}

\paragraph{Programs only containing actions.}
Intuitively, much of the difficulty in the Karel program synthesis task should come from inferring the control flow statements, i.e. \texttt{if}, \texttt{ifElse}, and \texttt{while}.
Synthesizing a Karel program that only contains actions is intrinsically a much more straightforward task, which a relatively simple search algorithm (such as A\textsuperscript{*}) can perform well.

We performed an experiment using test datasets generated by enumerating action-only programs of various lengths.
As there are five actions (\texttt{move}, \texttt{turnLeft}, \texttt{turnRight}, \texttt{putMarker}, \texttt{pickMarker}), there exist $5^L$ textually unique action-only programs for length $L$.
We sampled up to 500 unique programs of lengths $1, 2, \dots, 8$.
For each program, we generated 10 specifications, each containing 5 I/O pairs.
We sampled each I/O pair from the set of all input grids in the existing training data (of which there are 6.7 million), 
as to match its distribution as closely as possible.

Table~\ref{table:failure-actions} shows the results. %
Remarkably, even though the underlying programs have relatively low complexity, the model's accuracy is lower on every one of these action-only test sets than the existing provided test set.
The generalization accuracy grows as the program length becomes longer, even though those programs should be harder to synthesize.

Among the existing action-only programs in the test set, the model's generalization accuracy on that subset is 99.24\%.
Given the surprising nature of this result, we investigated the difference between the action-only programs we generated, and those in the existing test set.
We found that in the existing training and test sets, all programs contain at least two actions, and also contain at least one \texttt{move} action somewhere in the program.
These and any undiscovered differences in the distribution of programs seem to have caused the gap in performance.

\section{Application to Karel: Changing Training Distributions}
\label{sec:new-dataset}
In Section~\ref{sec:failure}, we saw that the existing model performs much more poorly on certain test datasets that we constructed, compared to its performance on the existing test set as reported in Section~\ref{sec:failure-spec}.
In light of the framework in Section~\ref{sec:abstract-data}, various imbalances of the salient random variables in the existing training data could have caused these gaps in performance.
Then, a natural solution is
to train using datasets constructed to avoid undesirable skews in the salient random variables, which should hopefully perform better across a variety of distributions.
\subsection{Training Datasets with Uniform I/O}
\label{sec:training-datasets-uniform-io}
We generated a training dataset by taking the programs of the existing training set and synthesizing I/O pairs using the procedure described in Section~\ref{sec:failure-random-uniform}.
Furthermore, to make the ``number of grids'' salient variable uniform, 
we modify the training procedure by uniformly sampling a number between 1 and 5 for each mini-batch, and using that many I/O grids to specify the program to the model.
We trained a model on this data and then evaluated it on the same set of narrow distribution evaluation datasets as mentioned in Section~\ref{sec:failure-random-narrow}.
Table~\ref{table:new-train-high-variance} and Figure~\ref{fig:grid-eval-uniform} compares how this new model performs to the baseline model.
The model trained on uniform I/O distributions maintains much higher generalization accuracy on the test sets of Section~\ref{sec:failure-random-narrow} than the baseline model.
Note that the uniform I/O distribution is not simply a union of the tested distributions and is intended to cover all possible input specifications. 

\begin{figure}[t]
    \centering
    \includegraphics[width=\linewidth]{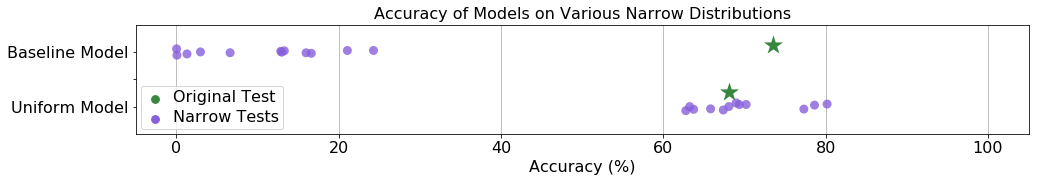}
    \caption{Comparison of generalization accuracies across the datasets given in Table~\ref{table:new-train-high-variance} plus performance on the original test set. The training datasets used for both models (denoted \textit{Baseline} and \textit{Uniform}) contained the same programs; however, for \textit{Uniform}, we sampled new I/O grids used to specify the programs, with homogenized salient random variables, as described in Section \ref{sec:failure-random-uniform}.}
    \label{fig:grid-eval-uniform}
\end{figure}

\subsection{Real-world Benchmarks}
We evaluated both the baseline model, and the uniform model described in the previous paragraph, on a set of 36 real-world Karel programming problems. This dataset was compiled from the Hour of Code Initiative and Stanford University's introductory computer science course, CS106A,
with the problems being hand-designed as educational exercises for students. 
We found that the baseline model got 4 correct (11.1\%) while the uniform model got 7 correct (19.4\%) when both models were trained with 5 shown I/O pairs, i.e., without making the ``number of grids'' salient variable uniform. This further demonstrates the uniform model's increased ability to generalize to out-of-distribution datasets, including those which are of interest to humans. 
Both models' accuracies are still low compared to the performance on the synthetically generated test set. 

After further analysis, we believe the models face two challenges on the real-world test set:
(1) many of the real-world problems require long programs to solve compared to the synthetic test set,
(2) the specifications in the real-world examples always contains fewer than 5 I/O pairs, and often only a single one. However, the original training methodology for the model assumes that it is provided with a diverse set of 5 I/O pairs. When we modified the training procedure to vary the number of shown I/O examples as in Section~\ref{sec:training-datasets-uniform-io}, the baseline model got 12 correct (33.3\%) and the uniform model got 11 correct (30.6\%). This shows that the homogenization on the number of I/O pairs was effective. Overall, the real-world dataset's I/O distribution was similar to the MSR training datset in terms of the salient random variables we homogenized, so it is unsurprising that the baseline model was able to outperform the uniform model, consistent with the results on the MSR test set in Figure~\ref{fig:grid-eval-uniform}.

\subsection{Augmenting Dataset with Action-only Programs}

We observed in Section~\ref{sec:failure-program} that the model fails to do well on either action-only programs or programs with many control-flow statements.
In the case of action-only programs, we found that the training data had been pruned to only include programs with at least two actions and at least one \texttt{move}, and in the case of programs with complex control flow, we found a similar sparsity in the train set.

As discussed in Section~\ref{sec:abstract-data}, the principled way to counteract this sparsity is to introduce uniformity into a set of salient variables. This methodology allows us to counteract both naturally sparse data (such as complex control-flow) and spurious data pre-processing (such as enforcing programs to have at least two actions).

In our case, we introduce uniformity into the length of action-only programs by synthesizing 20,000 programs of each length 1 to 20, by uniformly selecting tokens from the five action choices and generating I/O with other salient random variables as close to the original training set as possible;
we append these new programs to the original dataset to train a new model. Table~\ref{table:failure-actions} shows a clear improvement when homogenizing this salient random variable.
The new model achieved 71.8\% accuracy on the original test set, which is very close to the 73.52\% accuracy of the baseline model.

\subsection{Training Datasets with Narrowly Distributed I/O}
As done in for the uniform training dataset, 
we generated ``narrow'' training datasets by keeping the same programs as in the existing training data and replacing the I/O pairs with the process from Section~\ref{sec:failure-random-narrow}.

We trained a variety of models and evaluated them on 12 datasets of different I/O feature distributions.
Figure~\ref{fig:grid-eval-narrow} summarizes the results of evaluating each model on each dataset by noting the performance of the model on the narrow dataset of the same type and the outcome on every other narrow dataset. 
For the models trained on the low variance datasets, we observed that they all consistently achieved between 60 and 70 percent accuracy on their own training distribution; however, the uniform model was able to achieve similar performance (between 57 and 80 percent accuracy) as shown in Table~\ref{table:new-train-high-variance} and Figure~\ref{fig:grid-eval-uniform}. %
As such, we hypothesize that models trained on wide, uniform distributions can still perform comparable to models trained on narrow sub-distributions, even when tested on the same sub-distributions.
\begin{figure}
    \centering
    \includegraphics[width=\linewidth]{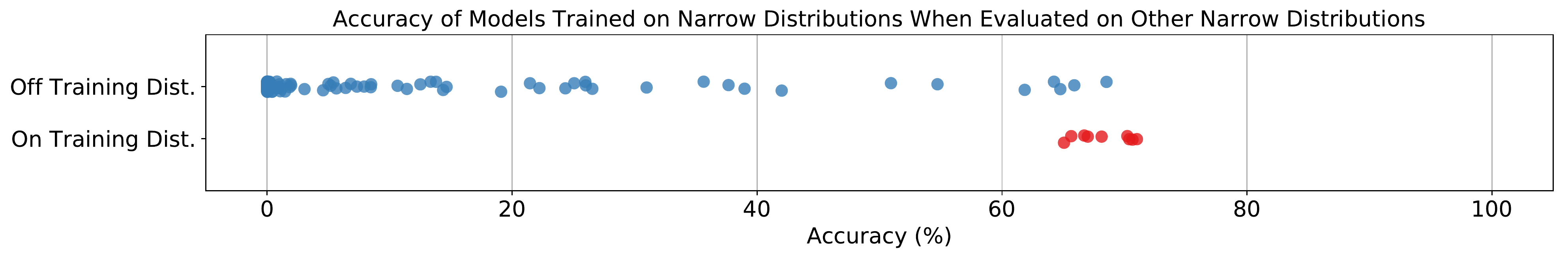}
    \caption{Evaluation of models that were trained on narrow distributions. When evaluated on their own training distribution, they consistently achieved similar results. When evaluated on a different narrow distribution, the performance was rarely similar and usually very low.}
    \label{fig:grid-eval-narrow}
\end{figure}

\section{Application to Calculator}
\label{sec:calculator}

The Calculator task is given as follows: given an expression such as \texttt{"5+4*(2+3)"}, compute the result modulo 10; in this case, 5.
Calculator is a program induction task rather than a program synthesis task like Karel; nevertheless, creating data for the Calculator problem involves sampling from a context-free grammar. Additionally, Calculator is not as intricate a domain as Karel and thus we can more completely control the environment of data generation with less fear of lurking variables.

\subsection{Calculator Environment}

\paragraph{Calculator model.}
Similar to the work by \cite{learntoexecute}, we implement an LSTM that parses calculator expressions on a character level.
We perform a 10-class classification problem using a dense network on the final hidden state of the LSTM. The prediction is correct if it exactly matches the evaluation result of the expression, modulo 10.
\paragraph{Distributions of Calculator tasks.}
We propose 4 distributions for calculator tasks: direct CFG sampling (\texttt{DCFG}), tensor2tensor sampling (\texttt{T2T}), ``runs'' CFG sampling (\texttt{RCFG}), and balanced sampling (\texttt{BAL}).

Two of our distributions represent reasonable ways in which a researcher might choose to sample data. The first is \texttt{DCFG}, which involves returning a digit with some probability $(1 - p)$, or else recursively sampling two productions and combining them with a $-$, $*$, or $+$, each with probability $\frac p 3$. This corresponds to a direct, weighted sampling of the CFG for the calculator grammar.
The second is \texttt{T2T}, which is used by the tensor2tensor library to sample arithmetic expressions in one of its examples.\footnote{\scriptsize \tt  \href{https://github.com/tensorflow/tensor2tensor/blob/8bd81e8fe9dafd4eb1dfa519255bcbe3e33c7ffa/tensor2tensor/data\_generators/algorithmic\_math.py}
{https://github.com/tensorflow/tensor2tensor/blob/8bd81e8fe9dafd4eb1dfa519255bcbe3e33c7ffa/\\tensor2tensor/data\_generators/algorithmic\_math.py}
}
It involves sampling a depth $d$, then ensuring that the resulting AST has depth $d$ by forcing a random side of the operation production to be sampled to $d - 1$ and the other side to be sampled to a depth $d' \sim \mathcal U\{0, 1, \dots, d - 1\}$.

The other two distributions represent potentially difficult or nonstandard problems that might appear in practical environments. \texttt{RCFG} is similar to \texttt{DCFG} but involves increasing the frequency of ``runs'' of the associative operations $+$ and $*$ by picking 2, 3, or 4 subexpressions and then combining them with the given symbol. \texttt{BAL} (balanced sampling) involves selecting a depth and then creating an AST that is a balanced binary tree at that depth. Importantly, regardless of sampling technique, redundant parentheses are removed. This is to increase the difficulty somewhat as order of operations needs to be established.

\subsection{Salient Variables and Methodology}

We use the following salient variables: length (rounded to the nearest even number), number of operations, number of pairs of parentheses, mean parenthesized depth, and maximum parenthesized depth. Parenthesized depth is defined for each digit and refers to the number of nested parentheses it is in. For example in \texttt{(1+2)*(3-4)+5}, the 1, 2, 3, and 4 are at depth 1 while the 5 is at depth 0.

We constructed $2 \times (1 + 5)$ distributions in total, corresponding to a total of 2 task distributions, \texttt{T2T} and \texttt{DCFG}, which represent the ``natural'' sampling techniques a researcher might employ, and $1 + 5$ homogenization strategies, one unhomogenized and five homogenized corresponding to each salient variable with $\varepsilon = 0.025$. We then evaluated each model on a fresh evaluation set sampled from a mixture of the four unhomogenized distributions (\texttt{T2T}, \texttt{DCFG}, \texttt{RCFG}, \texttt{BAL}).

\subsection{Results}

The original performances and improvements created by homogenizing different random variables can be found in Table \ref{table:calc_improve}. On average, homogenizing the \texttt{DCFG} and \texttt{T2T} distributions caused the accuracy to increase by 5.00pp and 2.84pp, respectively.

We note that the Calculator domain is much simpler than Karel when considering both input complexity (grid worlds versus arithmetic expressions) and output complexity (a DSL program versus a single digit). Furthermore, the difference in distributions between the naive sampling approaches and the versions with one homogenized random variable are not as different in Calculator as what we observed in Karel (see Table \ref{table:new-train-high-variance} for the dramatic effect of $\mathcal{D}_\text{marker}$). We hypothesize that it is this difference in complexity that explains the smaller (but still consistent) effect of homogenizing salient random variables in Calculator as compared to in Karel.

\newcolumntype{b}{>{\centering\arraybackslash}p{4.8em}}
\begin{table}
    \centering
    \resizebox{\textwidth}{!}{
    \begin{tabular}{lb|bbbbb}
    \toprule
    {} & Original &  Length & Max\thinspace Depth & Mean\thinspace Depth & \#Operations & \#Parens \\
    \midrule
\texttt{T2T}  &   83.83\% & +4.35pp &   +4.24pp &    +2.14pp &     +1.19pp & +2.32pp \\
    \texttt{DCFG} &   78.25\% & +3.84pp &   +5.92pp &    +4.02pp &     +6.72pp & +4.51pp \\
    \bottomrule
    \end{tabular}
    }
    \caption{Improvements in Calculator performance over unhomogenized distributions when various homogenizations were applied. See Section~\ref{sec:calculator} for details on performance metrics.}
    \label{table:calc_improve}
\end{table}

\section{Conclusion}

We demonstrate that 
existing sampling methods
for randomly generating input-output examples have unintended and overlooked distribution flaws in both the Calculator and the Karel domain.
These flaws prevent models trained on these distributions from generalizing to other test distributions, even if the are very simple.
To resolve these problems, we propose a robust strategy for controlling and evaluating the bias of synthetic data distributions over programs and specifications by defining certain random variables that capture desired features of the program and input spaces, such as the number of parentheses in a calculator expression, and specifically manipulating their distributions.
Equipped with our method, deep networks exhibit an increase in cross-distribution test accuracy, at the expense of a minor decrease in on-distribution test accuracy.
We believe this methodology would lead to more rigorous evaluation of the synthesis techniques and moreover, aid them in learning better models that generalize well.

Equipped with a set of hand-designed salient random variables, we demonstrate the effectiveness of homogenizing synthetic datasets over this set.
One of the core limitations of our approach is that the salient random variables are engineered by hand.
This requires the scientist to have insights about the structure of the training examples they are randomly generating.
Therefore, a promising extension of this algorithm is to automatically select which salient random variables to use, and automatically compute these variables; potentially via the use of a general unsupervised learning algorithm.

We evaluate our method on two domains: the Karel DSL, and a calculator expression parser. There is a natural question of whether the methods developed in this paper will improve out-of-distribution generalization 
on applications other than program synthesis which use synthetic data---for example, a convolutional neural network that receives renderings of a virtual environment, for the robotics and vision domains mentioned in Section~\ref{sec:related-work-synthetic-data}. Providing a thorough evaluation of our proposed homogenization algorithm on alternative domains is a promising area for future work.

\section*{Acknowledgements}
This material is in part based upon work supported by the National Science Foundation under Grant No. TWC-1409915, Berkeley Deep Drive, and DARPA D3M under Grant No. FA8750-17-2-0091. Any opinions, findings, and conclusions or recommendations expressed in this material are those of the author(s) and do not necessarily reflect the views of the National Science Foundation and DARPA.

\bibliographystyle{iclr2019_conference}
\bibliography{references}

\newpage
\appendix
\section{The Karel Domain}
\label{sec:karel-domain}

\begin{figure}[h]
\vspace{-10pt}
\begin{center}
\begin{tabular}{c|c}
\begin{minipage}{0.64\linewidth}
  \begin{eqnarray*}
  \mbox{Prog } p & := & \texttt{def main():} s \nonumber \\
  \mbox{Stmt } s & := & \texttt{while}(b): s \; | \; \texttt{repeat}(r): s \; | \; s_1;s_2  \\
   & | &  a \; | \; \texttt{if}(b): s \; | \; \texttt{if}(b): s_1 \texttt{else}: s_2 \nonumber \\
  \mbox{Cond } b & := &  \texttt{markersPresent()} \; | \; \texttt{leftIsClear()} \\
  & | & \texttt{rightIsClear()} \; | \; \texttt{frontIsClear()} \\
  & | & \texttt{not}(b) \nonumber \\
  \mbox{Action } a & := & \texttt{move()} \; | \; \texttt{turnLeft()} \; | \; \texttt{turnRight()} \\ 
  & | & \texttt{pickMarker()} \; | \; \texttt{putMarker()} \nonumber \\
  \mbox{Cste } r & := & 0 \; | \; 1 \; | \; \cdots \; | \; 19 \nonumber
\end{eqnarray*}

\end{minipage}

&

\begin{minipage} {0.36\linewidth}
  \texttt{GridWidth} : $m$ \nonumber\\
  \texttt{GridHeight}: $n$ \nonumber \\
  \texttt{Markers}: $\{ (i,j,k)_l \}_l$ \nonumber \\
  \texttt{Walls}: $\{(i,j)_t\}_t$ \nonumber \\
  \texttt{KarelLoc}: $(i,j)$ \nonumber \\
  \texttt{Orientation}: $d \in \{\texttt{N,S,E,W} \}$ \nonumber \\
  $2 \leq m,n \leq 16; i \leq m;$ \nonumber\\
  $j \leq n; 1 \leq k \leq 9 $\end{minipage}
  
  \\ \\
  
  (a) & (b)
\end{tabular}
\end{center}
\caption{(a) The DSL of Karel programs taken from~\citet{kareliclr2018}, and (b) a declarative specification of the space of valid input worlds for Karel programs.}
\label{fig:kareldsl}
\end{figure}

A declarative specification of the space of valid input worlds to Karel programs is shown in Figure~\ref{fig:kareldsl}(b). As with \citet{kareliclr2018}, we assume a bound on the input grid size to be $2 \leq m, n \leq 16$. Each cell $(i,j)$ in a grid can either be empty, contain an obstacle (i.e. a wall, specified by the list \texttt{Walls}), or contain $k \leq 9$ markers (defined using the list \texttt{Markers}).
The agent starts at some cell denoted by \texttt{KarelLoc} in the grid (which may contain markers but no obstacle) with a particular orientation direction denoted by \texttt{Orientation}. 

The grammar for the Karel DSL we consider in this work is shown in Figure~\ref{fig:kareldsl}(a). The DSL allows the Karel agent to perform a \texttt{move} action to move one step in the grid in the direction of the orientation, actions \texttt{turnLeft} and \texttt{turnRight} to change its orientation direction, and actions \texttt{pickMarker} and \texttt{putMarker} to manipulate markers. The language contains \texttt{if}, \texttt{ifElse}, \texttt{while} constructs with conditionals \texttt{\{front,left,right\}IsClear}, \texttt{markersPresent}, and their negations. The
\texttt{repeat} construct allows for a fixed number of repetitions. Note that the language does not contain any variables or auxiliary functions.

\section{Salient Variable Homogenization Algorithm}

The following is a more formal description of the Algorithm described in Section \ref{sec:abstract-data}, as well as proofs of correctness, and an investigation of the $\epsilon$ parameter's pratical effect.

\subsection{Full Pseudocode}
\label{sec:salient-var-algo}

Let $\mathbb S$ be some space that is sampled by some original distribution $q : \mathbb S \to [0, 1]$. Let $\mathbb X$ be the space of a salient variable which is calculated by $\nu : \mathbb S \to \mathbb X$; it is a finite set. Let $\varepsilon$ be our tolerance.

\begin{algorithmic}
\Procedure{SampledHomogenize}{$q : \mathbb S \to [0, 1]$, $\nu : \mathbb S \to \mathbb X$, $\varepsilon \in \mathbb R$, $n \in \mathbb{N}$}
    \State $C \gets \{x \to 0 : x \in \mathbb X\}$ \Comment{Set up a dictionary of counts of each value of $\mathbb X$}
    \State $t \gets 0$ \Comment{The total number of samples seen}
    \State $\mathcal{D} \gets \{\}$ \Comment{$\mathcal{D}$ is originally an empty multiset}
    \While {$|\mathcal{D}| < n$}
        \State $s \gets$ a sample from $q$
        \State $C[\nu(s)] \gets C[\nu(s)] + 1$
        \State $t \gets t + 1$
        \State $p_{\min} \gets \frac{\min_{x \in \mathbb X} C[x]}{t}$
        \State $p_{curr} \gets \frac{C[\nu(s)]}{t}$
        \State $g \gets \frac{p_{\min} + \varepsilon}{p_{curr} + \varepsilon}$
        \State $h \gets \begin{cases} 1 & \text{with probability } g\\ 0 & \text{with probability } 1 - g\end{cases}$ 
        \If {$h = 1$}
            \State $\mathcal{D} \gets \mathcal{D} \cup \{s\}$
        \EndIf
    \EndWhile
    
    \Return $\mathcal{D}$
\EndProcedure
\end{algorithmic}

\subsection{Proof of Correctness}
\label{sec:salient-var-proof}

Let the initial sampling distribution be $q$ and the resulting distribution be $r$. We use the notation $P_r[X = x]$ to refer to the probability that $X = x$ given that $S$ is sampled from the distribution $r$.
$C[x]$ refers to the count for the salient variable value $x \in \mathbb{X}$ among past samples from $q$, as defined in \textsc{SampledHomogenize}.

\begin{theorem}
The Salient Variable Homogenization algorithm produces samples from a distribution close to uniform. Formally, after $\frac{48\log\frac{2|\mathbb X|}\delta}{p_m |\mathbb X|^2 \xi^2}$ samples are drawn from distribution $q$, we have that the resulting homogenized distribution satisfies
    $$\left|P_r[X = x] - \frac{1}{|\mathbb X|}\right| \leq \xi$$
with probability at least $1 - \delta$, where $p_m = \min_{x \in \mathbb X} P_q[X = x] > 0$ and $\varepsilon$ has been set to $0$.
\end{theorem}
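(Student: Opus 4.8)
The plan is to split the argument into a deterministic error-propagation step and a concentration step, linked by the observation that once the running counts $C[\cdot]$ accurately estimate the true $q$-probabilities, the accepted distribution $r$ is forced to be near uniform. Throughout I write $p_x = P_q[X=x]$, $\hat p_x = C[x]/t$, $\hat p_{\min} = \min_{y} \hat p_y$, and $n = |\mathbb X|$.

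First I would fix the counts after the draws and write down the conditional law of a freshly-drawn-and-accepted sample. A sample from $q$ takes value $x$ with probability $p_x$ and is then accepted with probability $g_x = \hat p_{\min}/\hat p_x$ (the $\varepsilon = 0$ form of $g$). Hence the joint probability of drawing value $x$ and accepting is $p_x\,\hat p_{\min}/\hat p_x$, and after normalizing over $x$ the common factor $\hat p_{\min}$ cancels, yielding the key identity
$$P_r[X = x] = \frac{p_x/\hat p_x}{\sum_{y \in \mathbb X} p_y/\hat p_y} = \frac{a_x}{\sum_{y} a_y}, \qquad a_x := \frac{p_x}{\hat p_x}.$$
When the estimates are exact ($\hat p_x = p_x$) each $a_x = 1$ and $P_r$ is exactly uniform; this is the source of the algorithm's correctness, and the remainder of the proof controls the deviation of the $a_x$ from $1$.

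Next comes the deterministic step. I would show that a uniform relative error $|\hat p_x - p_x| \le \epsilon\, p_x$ for all $x$ forces $a_x \in [\,1/(1+\epsilon),\, 1/(1-\epsilon)\,]$, hence $\sum_{y} a_y \in [\,n/(1+\epsilon),\, n/(1-\epsilon)\,]$. Substituting these extremes into the identity and simplifying gives a clean bound of the form $\bigl|P_r[X=x] - 1/n\bigr| \le \frac{2\epsilon}{n(1 - 2\epsilon)}$. Choosing the relative-error target $\epsilon = n\xi/4$ then makes the right-hand side at most $\xi$; this uses the mild regime $n\xi \le 1$, under which $\epsilon < 1/2$ and the denominator stays positive.

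Finally I would supply the concentration step guaranteeing that this relative error holds with probability $1-\delta$. Since $C[x] \sim \mathrm{Binomial}(t, p_x)$, the multiplicative Chernoff bound gives $P[\,|\hat p_x - p_x| \ge \epsilon\, p_x\,] \le 2\exp(-t\, p_x \epsilon^2/3)$; using $p_x \ge p_m$ and a union bound over the $n$ values, all relative errors are below $\epsilon$ with probability at least $1 - 2n\exp(-t\, p_m \epsilon^2/3)$. Setting the failure probability equal to $\delta$ and solving yields $t \ge \frac{3\log(2n/\delta)}{p_m\epsilon^2}$, and substituting $\epsilon = n\xi/4$ reproduces exactly the claimed count $N = \frac{48\log(2n/\delta)}{p_m\,n^2\xi^2}$. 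The main obstacle I anticipate is not any single inequality but pinning down the precise meaning of the ``resulting distribution'' $r$, since the acceptance probabilities depend on counts that evolve over the run and even include the current sample (the code increments $C[\nu(s)]$ and $t$ before computing $g$). The cleanest resolution is to define $r$ as the law of one further accept given the counts after $N$ draws and to absorb the $O(1/t)$ off-by-one discrepancy into the concentration slack; I would also emphasize that $p_m > 0$ is precisely what keeps $\hat p_x$ and $p_x$ bounded away from zero, legitimizing the relative-error (multiplicative Chernoff) framework that is responsible for the favorable first-power dependence on $p_m$.
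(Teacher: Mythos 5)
Your proposal is correct and follows essentially the same route as the paper's own proof: the same normalized-acceptance identity $P_r[X=x] \propto P_q[X=x]/C[x]$ (the paper's Probability Simplification Lemma), the same multiplicative Chernoff plus union bound on the empirical counts (the Count Bounding Lemma), and the same choice $\alpha = |\mathbb X|\xi/4$ yielding the stated sample count. Your closing remarks about the evolving counts and the role of $p_m > 0$ flag real subtleties that the paper also elides, but they do not change the argument.
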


\begin{proof}
    We can simplify the probability as 
        $$P_r[X = x] = \frac{P_q[X = x] / C[x]}{\sum_{z \in \mathbb X} P_q[X = z] / C[z]}$$
    (for $\varepsilon = 0$; see Probability Simplification Lemma below).

    Let $\alpha = \frac{\xi |\mathbb X|}{4}$. We have that $n > \frac{3\log \frac {2|\mathbb X|}\delta}{\alpha^2 p_m}$ by assumption and substituting in $\alpha$. By the Count Bounding Lemma we have that $\left|\frac{n P_q[X = x]}{C[x]} - 1\right| \leq \alpha$ for all $x$ with probability at least $1 - \delta$. The following computations assume $\left|\frac{n P_q[X = x]}{C[x]} - 1\right| \leq \alpha$ for all $x$ and thus are valid with probability $1 - \delta$.
    
    We have $\frac{n P_q[X = x]}{C[x]} \in [1 - \alpha, 1 + \alpha]$. We also have
        
    \begin{align*}
        \left|\frac{1}{|\mathbb X|}\sum_{z \in \mathbb X} n P[X = z] / C[z] - 1\right|
            &= \left|\frac{1}{|\mathbb X|}\sum_{z \in \mathbb X} (n P[X = z] / C[z] - 1)\right| \\
            &= \leq \frac{1}{|\mathbb X|}\sum_{z \in \mathbb X} |n P[X = z] / C[z] - 1|\\
            &= \leq \frac{1}{|\mathbb X|}\sum_{z \in \mathbb X} \alpha\\
            &= \alpha
    \end{align*}
    
    and thus $\frac{1}{|\mathbb X|}\sum_{z \in \mathbb X} \frac{n P[X = z]}{C[z]} \in [1 - \alpha, 1 + \alpha]$.
    
    Combining the previous two ranges via a division, we have
    
    $$|\mathbb X| P_r[X=x] = \frac{\frac{n P_q[X = x]}{C[x]}}{\frac{1}{|\mathbb X|}\sum_{z \in \mathbb X} \frac{n P[X = z]}{C[z]}} \in \left[\frac{1 - \alpha}{1 + \alpha}, \frac{1 + \alpha}{1 - \alpha}\right]$$
    
    We have that $\frac{1 - \alpha}{1 + \alpha} = 1 - \frac{2\alpha}{1 + \alpha} \geq 1 - 2\alpha \geq 1 - 4 \alpha$ and $\frac{1 + \alpha}{1 - \alpha} = 1 + \frac{2\alpha}{1 - \alpha} = 1 + \frac{4\alpha}{2 - 2\alpha} \leq 1 + 4\alpha$ since $2 - 2\alpha > 1$ for small $\alpha$. Thus, we have that $\left[\frac{1 - \alpha}{1 + \alpha}, \frac{1 + \alpha}{1 - \alpha}\right] \subseteq [1 - 4\alpha, 1 + 4\alpha]$ and therefore we have
    
    $$|\mathbb X| P_r[X=x] \in [1 - 4\alpha, 1 + 4\alpha]$$
    
    we thus have that $\left|P_r[X = x] - \frac{1}{|\mathbb X|}\right| \leq \frac{4\alpha}{|\mathbb X|} = \xi$, completing our proof.
\end{proof}

\begin{lemma}[Count Bounding]
We have that if $n > \frac{3 \log \frac {2|\mathbb X|}\delta}{\alpha^2 p_m}$ samples have been drawn from $q$ that $$P\left[\exists x \in \mathbb X, \left|\frac{n P_q[X = x]}{C[x]} - 1\right| \geq \alpha\right] \leq \delta$$ where $p_m = \min_{x \in \mathbb X} P_q[X = x]$
\end{lemma}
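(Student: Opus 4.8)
The plan is to recognize that, for each fixed $x \in \mathbb{X}$, the count $C[x]$ is a sum of $n$ independent indicator variables (one per sample drawn from $q$), each equal to $1$ with probability $p_x := P_q[X=x]$. Hence $C[x]$ is Binomial with mean $\mu_x = n p_x \geq n p_m$, and the quantity $\frac{n P_q[X=x]}{C[x]} = \frac{\mu_x}{C[x]}$ is exactly the ratio of the mean to the empirical count. I would therefore reduce the stated ratio bound to a standard concentration statement about $C[x]$ around $\mu_x$, and finish with a union bound over the $|\mathbb{X}|$ possible values of the salient variable.

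First I would translate the ratio event into two one-sided deviation events for $C[x]$. Since $C[x]$ sits in the denominator, I invert the inequalities carefully: $\frac{\mu_x}{C[x]} \geq 1 + \alpha$ is equivalent to $C[x] \leq \frac{\mu_x}{1+\alpha} = (1-\beta_1)\mu_x$ with $\beta_1 = \frac{\alpha}{1+\alpha}$, and $\frac{\mu_x}{C[x]} \leq 1 - \alpha$ is equivalent to $C[x] \geq \frac{\mu_x}{1-\alpha} = (1+\beta_2)\mu_x$ with $\beta_2 = \frac{\alpha}{1-\alpha}$. The lower-tail event also safely subsumes the degenerate case $C[x]=0$, so no separate handling is needed.

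Next I would apply the multiplicative Chernoff bounds $P[C[x] \leq (1-\beta_1)\mu_x] \leq e^{-\beta_1^2 \mu_x/2}$ and $P[C[x] \geq (1+\beta_2)\mu_x] \leq e^{-\beta_2^2 \mu_x/3}$. The key computation is to verify that both exponents are at least $\alpha^2 \mu_x / 3$. For the upper tail this is immediate, since $\beta_2 = \frac{\alpha}{1-\alpha} \geq \alpha$ gives $\beta_2^2/3 \geq \alpha^2/3$. For the lower tail I need $\frac{\beta_1^2}{2} \geq \frac{\alpha^2}{3}$, i.e. $(1+\alpha)^2 \leq \frac{3}{2}$, which holds for all sufficiently small $\alpha$ (the regime implicitly assumed throughout the surrounding argument). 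Thus each tail probability is at most $e^{-\alpha^2 \mu_x/3} \leq e^{-\alpha^2 n p_m/3}$. Collecting the two tails per $x$ and union bounding over all $x \in \mathbb{X}$ yields $P[\exists x : |\frac{\mu_x}{C[x]} - 1| \geq \alpha] \leq 2|\mathbb{X}|\, e^{-\alpha^2 n p_m/3}$; substituting the hypothesis $n > \frac{3\log \frac{2|\mathbb{X}|}{\delta}}{\alpha^2 p_m}$ forces the right-hand side to be at most $\delta$, which is the claim.

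The main obstacle is the denominator inversion and the resulting mismatch between the nominal deviation $\alpha$ and the actual multiplicative factors $\beta_1 = \frac{\alpha}{1+\alpha}$ and $\beta_2 = \frac{\alpha}{1-\alpha}$: I must confirm that the \emph{weaker} lower-tail factor $\beta_1 < \alpha$ still produces the claimed $e^{-\alpha^2\mu_x/3}$ rate, which is precisely where the ``small $\alpha$'' assumption enters. Everything else—identifying the Binomial structure, applying Chernoff, and the final union bound and substitution—is routine.
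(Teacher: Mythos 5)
Your proof is correct and follows essentially the same route as the paper's: model $C[x]$ as a Binomial, apply two-sided multiplicative Chernoff bounds to get a rate of $e^{-\alpha^2 n p_m/3}$ per tail, and finish with a union bound over tails and over $x\in\mathbb X$. The only cosmetic difference is bookkeeping—you invert the denominator exactly and then check the exponents, while the paper chooses deviations $(1+\alpha)$ and $(1-\sqrt{2/3}\,\alpha)$ up front and verifies the reciprocals bracket $[1-\alpha,1+\alpha]$—and both arguments hinge on the same smallness condition $\alpha\le\sqrt{3/2}-1$.
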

\begin{proof}
We can model $C[x]$ as a sum of $n$ independent Bernoulli trials with probability $P_q[X = x]$. Using Chernoff bound, we have that

    $$P[C[x] \geq n P_q[X = x] (1 + \alpha)] \leq e^{-\alpha^2 n P_q[X = x] / 3}$$

and 

    $$P[C[x] \geq n P_q[X = x] (1 - \sqrt{2/3}\alpha)] \leq e^{-\alpha^2 n P_q[X = x] / 3}$$
    
Thus, we have by union bound that

    $$P\left[\frac{n P_q[X = x]}{C[x]} \geq \frac{1}{1 - \sqrt{2/3}\alpha} \vee \frac{n P_q[X = x]}{C[x]} \leq \frac{1}{1 + \alpha}\right] \leq 2e^{-\alpha^2 n P_q[X = x]/3}$$

For $\alpha \leq \sqrt{3/2} - 1$ we have that $\frac1{1-\sqrt{2/3}\alpha} = 1 + \frac{\sqrt{2/3}\alpha}{1 - \sqrt{2/3}\alpha} = 1 + \frac{\alpha}{\sqrt{3/2} - \alpha} \leq 1 + \alpha$ and $\frac1{1 + \alpha} = 1 - \frac{\alpha}{1 + \alpha} \geq 1 - \alpha$. Thus, we can restate the previous inequality as

    $$P\left[\frac{n P_q[X = x]}{C[x]} \geq 1 + \alpha \vee \frac{n P_q[X = x]}{C[x]} \leq 1 - \alpha\right] \leq 2e^{-\alpha^2 n P_q[X = x]/3}$$

or in other words 

    $$P\left[\left|\frac{n P_q[X = x]}{C[x]} - 1\right| \geq \alpha\right] \leq 2e^{-\alpha^2 n P_q[X = x]/3}$$

Thus we can bound the RHS as
    $$2e^{-\alpha^2 n P_q[X = x]/3} \leq 2e^{-\alpha^2 n p_m/3} < 2e^{-\alpha^2 \frac{3 \log \frac {2|\mathbb X|}\delta}{\alpha^2 p_m} p_m/3} = 2e^{-\log \frac {2|\mathbb X|}\delta} = \frac{\delta}{|\mathbb X|}$$

where the first inequality is since $P_q[X = x] \geq p_m$ and the second is since $n > \frac{3 \log \frac {2|\mathbb X|}\delta}{\alpha^2 p_m}$.

We can again apply union bound to get that $$P\left[\exists x \in \mathbb X, \left|\frac{n P_q[X = x]}{C[x]} - 1\right| \geq \alpha\right] \leq |\mathbb X| \frac\delta{\mathbb X} = \delta$$

\end{proof}

\begin{lemma}[Probability Simplification]
    $$P_r[X = x] = \frac{P_q[X = x] / C[x]}{\sum_{z \in \mathbb X} P_q[X = z] / C[z]}$$
\end{lemma}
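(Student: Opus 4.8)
The plan is to treat the homogenization step as ordinary rejection sampling conditioned on the current count vector $C$, and to read the acceptance probability directly off \textsc{SampledHomogenize}. First I would set $\varepsilon = 0$ and simplify the per-sample acceptance probability. Since the pseudocode sets $p_{\min} = \frac{\min_{z \in \mathbb X} C[z]}{t}$ and $p_{curr} = \frac{C[\nu(s)]}{t}$, the common factor $t$ cancels and a proposal $s$ is accepted with probability
$$g(s) = \frac{\min_{z \in \mathbb X} C[z]}{C[\nu(s)]}.$$
The crucial observation is that $g(s)$ depends on $s$ only through $\nu(s)$, so writing $m = \min_{z \in \mathbb X} C[z]$ the acceptance probability of any proposal whose salient value is $x$ is the constant $a(x) = m / C[x]$.

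Next I would compute the distribution of the salient value $X$ of an \emph{accepted} sample. Drawing $s \sim q$ and then accepting, the joint probability that the proposal satisfies $\nu(s) = x$ and is accepted is $P_q[X = x]\, a(x)$, and the total acceptance probability over one propose step is $\sum_{z \in \mathbb X} P_q[X = z]\, a(z)$. By the definition of conditional probability, the resulting distribution $r$ of an accepted sample satisfies
$$P_r[X = x] = \frac{P_q[X = x]\, a(x)}{\sum_{z \in \mathbb X} P_q[X = z]\, a(z)} = \frac{P_q[X = x]\, m / C[x]}{\sum_{z \in \mathbb X} P_q[X = z]\, m / C[z]}.$$
Finally, the factor $m$ occurs in every term of both numerator and denominator, so it cancels, yielding exactly $P_r[X = x] = \frac{P_q[X = x]/C[x]}{\sum_{z \in \mathbb X} P_q[X = z]/C[z]}$, which is the claim.

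The computation itself is routine rejection sampling; the one point I would be most careful about is the interpretation of $r$. The counts $C$ evolve over the run of the algorithm, so the identity must be read as the conditional distribution of an accepted sample \emph{given the current counts} $C$. This is precisely the form the downstream Theorem needs, since there $C[x]$ is replaced by its concentration value $C[x] \approx n\, P_q[X = x]$ via the Count Bounding Lemma, which then forces $P_r[X=x]$ toward uniform. I would therefore make this conditioning explicit in the statement and emphasize that it is exactly the cancellation of the minimum term $m$ that renders the final expression independent of the otherwise awkward $\min_{z} C[z]$ factor.
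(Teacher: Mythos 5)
Your proposal is correct and follows essentially the same route as the paper's proof: both treat the accept/reject step as rejection sampling, exploit the fact that $g(s)$ depends on $s$ only through $\nu(s)$, normalize by the total acceptance probability, and cancel the common factor $\min_{z}C[z]$. The paper writes the argument at the level of individual samples $s$ and sums over the fiber $\{s : \nu(s)=x\}$, whereas you work directly with the induced distribution of $X$, but this is only a presentational difference; your added remark that the identity should be read conditionally on the current counts $C$ is a correct and worthwhile clarification that the paper leaves implicit.
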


\begin{proof}
 \begin{align*}
        P_r[X = x]
            &= \sum_{s \in \mathbb S : \nu(s) = x} P_r[S = s]\\
            &= \sum_{s \in \mathbb S : \nu(s) = x} \frac{g(s) q(s)}{\sum_{s' \in \mathbb S} g(s') q(s')}\\
            &= \frac{\sum_{s \in \mathbb S : \nu(s) = x} g(s) q(s)}{\sum_{s' \in \mathbb S} g(s') q(s')}\\
            &= \frac{\sum_{s \in \mathbb S : \nu(s) = x} g(s) q(s)}{\sum_{z \in \mathbb X}\sum_{s' \in \mathbb S : \nu(s') = z} g(s') q(s')}
    \end{align*}
    
    Since we can simplify
    \begin{align*}
        \sum_{s \in \mathbb S : \nu(s) = x} g(s) q(s)
            &= \sum_{s \in \mathbb S : \nu(s) = x} \frac{\min_{y \in \mathbb X} C[y]}{C[\nu(s)]} q(s)\\
            &= \frac{\min_{y \in \mathbb X} C[y]}{C[x]} \sum_{s \in \mathbb S : \nu(s) = x} q(s)\\
            &= \frac{\min_{y \in \mathbb X} C[y]}{C[x]} P_q[X = x]
    \end{align*}
    
    we have that
    
    \begin{align*}
        P_r[X = x] &= \frac{\sum_{s \in \mathbb S : \nu(s) = x} g(s) q(s)}{\sum_{y \in \mathbb X}\sum_{s' \in \mathbb S : \nu(s') = y} g(s') q(s')}\\
            &= \frac{\frac{\min_{y \in \mathbb X} C[y]}{C[x]} P_q[X = x]}{\sum_{z \in \mathbb X} \frac{\min_{y \in \mathbb X} C[y]}{C[z]} P_q[X = z]}\\
            &= \frac{P_q[X = x] / C[x]}{\sum_{z \in \mathbb X} P_q[X = z] / C[z]}
    \end{align*}
\end{proof}

\subsection{Efficiency Analysis}
\label{sec:salient-var-effic}

We show that the number of samples from the original distribution required to produce a sample from the homogenized distribution is $O(\frac1\varepsilon)$ in expectation.

In the Salient Variable Homogenization algorithm, we have the probability of not rejecting a given sample as $g(s) = \frac{\min_x P[X = x] + \varepsilon}{P[X = X(s)] + \varepsilon}$. We know that 

$$g(s) = \frac{\min_x P[X = x] + \varepsilon}{P[X = v(s)] + \varepsilon} \geq \frac {\varepsilon}{P[X = v(s)] + \varepsilon} \geq \frac {\varepsilon}{1 + \varepsilon}$$

Since each sample is independent, we can model this as a geometric distribution, and thus we have that the expected number of tries $t = \frac{1}{g(s)} \leq 1 + \frac{1}{\varepsilon}$. We thus have that in expectation, we need to sample $O(\frac{1}{\varepsilon})$ samples from the original distribution to produce one homogenized sample.

\subsection{Empirical Effect Of Varying $\varepsilon$}
\label{sec:empirical-effic}

\begin{figure}[ht]
    \centering
    \includegraphics[width=.6\linewidth]{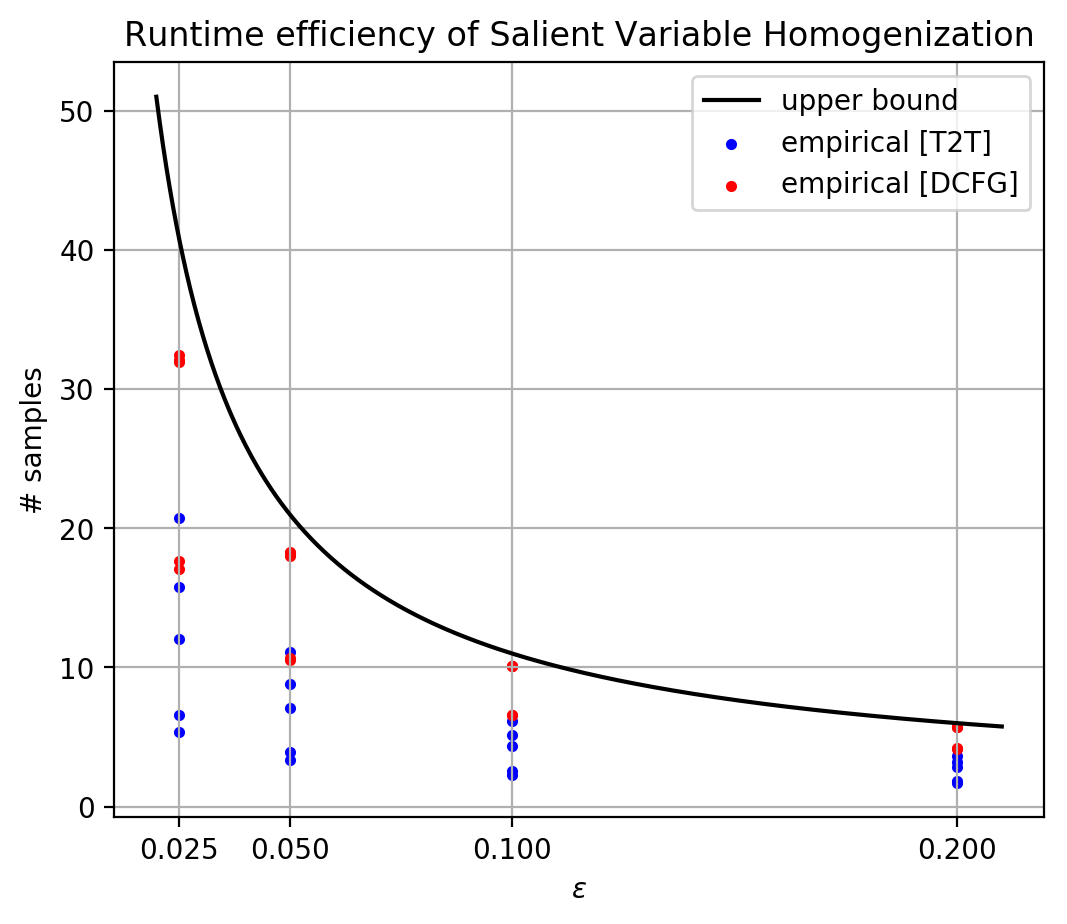}
    \caption{Number of samples required by salient variable homogenization parameterized by an $\varepsilon$ before a new sample is returned.}
\end{figure}

The solid line is an upper bound $\frac {\varepsilon}{1 + \varepsilon}$ derived in Section~\ref{sec:salient-var-effic}. Seen in the measured samples for different values of $\varepsilon$, the upper bound appropriately reflects the maximum height of the samples, with very little remaining space on the DCFG dataset and some but not much on the T2T dataset, and is thus a close bound. As the values of $\varepsilon \rightarrow \infty$ the bound approaches the limit 1, indicating no samples are rejected by the algorithm.

\newcolumntype{B}{>{\centering\arraybackslash}p{4.8em}}
\begin{table}[ht]
    \centering
    \resizebox{0.8\textwidth}{!}{
    \begin{tabular}{lB|BBBB}
    \toprule
    {} & Original &  $\varepsilon=0.025$ & $\varepsilon=0.050$ & $\varepsilon=0.100$ & $\varepsilon=0.200$ \\
    \midrule
    T2T  &   83.83\% & +2.84pp &   +1.31pp &    +1.33pp &     +2.45pp \\
    DCFG &   78.25\% & +5.00pp &   +4.50pp &    +3.54pp &     +3.42pp \\
    \bottomrule
    \end{tabular}
    }
    \caption{Improvements in Calculator performance with homogenized datasets of various sampling parameters $\varepsilon$. See Section~\ref{sec:calculator} for details on performance metrics.}
    \label{table:calc_epsilon}
\end{table}

Increasing the parameter $\varepsilon$ has the theoretical affect of causing the homogenized distribution to deviate more from uniform in its salient random variables, as shown in \ref{sec:salient-var-effic}.
In practice, we discover that performance boosts tend to decrease with increasing $\varepsilon$, although the effect was not as pronounced on the T2T dataset, potentially because the unhomogenized T2T distribution is closer to uniform and thus homogenization has a limited effect for any larger $\varepsilon$ values.

\subsection{Empirical Evidence for Increase in Uniformity}
\label{sec:empirical-evidence-increase}

Empirically, the Salient Variable Homogenization algorithm led to increases in uniformity in the variable being homogenized. We measure uniformity by KL divergence between the distribution being measured and the uniform distribution. A table of relative improvements is given in Table \ref{table:calc_kl_improve}.

\begin{table}[ht]
    \centering
    \begin{tabular}{l|bbbbb}
    \toprule
    {} & Length & Max\thinspace Depth & Mean\thinspace Depth & \#Operations & \#Parens \\
    \midrule
DCFG & 42.98\% & 30.77\% & 27.05\% & 43.95\% & 23.63\%\\
T2T & 46.68\% & 30.45\% & 13.99\% & 38.82\% & 36.91\%\\
    \bottomrule
    \end{tabular}
    \caption{Percentage reductions in KL-divergence from uniform of the given salient variable when homogenized at $\varepsilon = 0.025$.}
    \label{table:calc_kl_improve}
\end{table}

\end{document}